\DeclareSIUnit[number-unit-product={}]{\percent}{\%}
\newtheorem{proposition}{Proposition}
\newtheorem{lemma}{Lemma}
\newtheorem{definition}{Definition}
\newcommand{\figscale}{1.0}
\title{Trace norm regularization and faster inference for 
embedded speech recognition RNNs}
\author{
Markus Kliegl, Siddharth Goyal, Kexin Zhao, Kavya Srinet \& Mohammad Shoeybi  \\
Baidu Silicon Valley Artificial Intelligence Lab \\
\texttt{\{klieglmarkus,goyalsiddharth,zhaokexin01,srinetkavya,}
\\\texttt{ mohammad\}@baidu.com}
}
\begin{document}
\maketitle

\begin{abstract}
We propose and evaluate new techniques for compressing and speeding up dense
matrix multiplications as found in the fully connected and recurrent layers of
neural networks for embedded large vocabulary continuous speech recognition
(LVCSR). For compression, we introduce and study a trace norm
regularization technique for training low rank factored versions of matrix
multiplications. Compared to standard low rank training, we show that our method
leads to good accuracy versus number of parameter trade-offs
and can be used to speed up training of large models. For speedup, we
enable faster inference on ARM processors through new open sourced kernels
optimized for small batch sizes, resulting in 3x to 7x speed ups over the widely
used gemmlowp library. Beyond LVCSR, we expect our techniques and kernels to be
more generally applicable to embedded neural networks with large fully connected
or recurrent layers.
\end{abstract}

\section{Introduction}
\label{sec:intro}

For embedded applications of machine learning, we seek models that are
as accurate as possible given constraints on size and on latency at inference time. For
many neural networks, the parameters and computation are concentrated in two
basic building blocks:
\begin{enumerate}
    \item \textbf{Convolutions}. These tend to dominate in, for example, image
    processing applications.
    \item \textbf{Dense matrix multiplications} (GEMMs) as found, for example,
    inside fully connected layers or recurrent layers such as GRU and LSTM. These
    are common in speech and natural language processing applications.
\end{enumerate}

These two building blocks are the natural targets for efforts to reduce
parameters and speed up models for embedded applications. Much work on this topic
already exists in the literature. For a brief overview, see
Section~\ref{sec:related}.

In this paper, we focus only on dense matrix multiplications and not on convolutions.
Our two main contributions are:
\begin{enumerate}
    \item \textbf{Trace norm regularization:} We describe a trace norm
    regularization technique and an accompanying training methodology that
    enables the practical training of models with competitive accuracy versus
    number of parameter trade-offs. It automatically selects the rank and
    eliminates the need for any prior knowledge on suitable matrix rank.
    \item \textbf{Efficient kernels for inference:} We explore the importance of
    optimizing for low batch sizes in on-device inference, and we introduce
    kernels\footnote{Available at \url{https://github.com/paddlepaddle/farm}.}
    for ARM processors that vastly outperform publicly available kernels in the
    low batch size regime.
\end{enumerate}

These two topics are discussed in Sections~\ref{sec:modelcomp}
and~\ref{sec:fastkernels}, respectively. Although we conducted our experiments
and report results in the context of large vocabulary continuous speech
recognition (LVCSR) on embedded devices, the ideas and techniques are broadly
applicable to other deep learning networks. Work on compressing any neural
network for which large GEMMs dominate the parameters or computation time could
benefit from the insights presented in this paper.

\section{Related work}
\label{sec:related}

Our work is most closely related to that of~\citet{prabhavalkar2016compression},
where low rank factored acoustic speech models are similarly trained by
initializing weights from a truncated singular value decomposition (SVD) of pretrained
weight matrices. This technique was also applied to speech recognition on mobile
devices~\citep{mcgraw2016personalized,xue2013restructuring}. We build on this
method by adding a variational form of trace norm regularization that was first
proposed for collaborative prediction~\citep{srebro2005maximum} and also applied
to recommender systems~\citep{koren2009matrix}. The use of this technique with
gradient descent was recently justified
theoretically~\citep{ciliberto2017reexamining}. Furthermore,
\citet{neyshabur2014search} argue that trace norm regularization could provide a
sensible inductive bias for neural networks. To the best of our knowledge, we are
the first to combine the training technique
of~\citet{prabhavalkar2016compression} with variational trace norm
regularization.

Low rank factorization of neural network weights in general has been the subject
of many other 
works~\citep{denil2013predicting,sainath2013low,ba2014deep,kuchaiev2017factorization}. 
Some other approaches for dense matrix compression include
sparsity~\citep{lecun1989optimal, narang2017exploring}, hash-based parameter
sharing~\citep{chen2015compressing}, and other parameter-sharing schemes such as
circulant, Toeplitz, or more generally low-displacement-rank
matrices~\citep{sindhwani2015structured,lu2016learning}.
\citet{kuchaiev2017factorization} explore splitting activations into
independent groups. Doing so is akin to using block-diagonal matrices.

The techniques for compressing convolutional models are different and beyond
the scope of this paper. We refer the interested reader to,
e.g.,~\citet{denton2014exploiting,han2015deep,iandola2016squeezenet} and
references therein.

\section{Training low rank models}
\label{sec:modelcomp}

Low rank factorization is a well studied and effective technique for compressing
large matrices. In~\citet{prabhavalkar2016compression}, low rank models are
trained by first training a model with unfactored weight matrices (we refer to this as
stage 1), and then initializing a model with factored weight matrices from the
truncated SVD of the unfactored model (we refer to this as \textit{warmstarting} a
stage 2 model from a stage 1 model). The truncation is
done by retaining only as many singular values as required to explain a
specified percentage of the variance.

If the weight matrices from stage 1 had only a few nonzero singular values, then the
truncated SVD used for warmstarting stage 2 would yield a much better or even error-free
approximation of the stage 1 matrix. This suggests
applying a sparsity-inducing $\ell^1$ penalty on the vector of singular values during
stage 1 training. This is known as trace norm regularization in the literature.
Unfortunately, there is no known way of directly computing
the trace norm and its gradients that would be computationally feasible in the
context of large deep learning models. Instead, we propose to combine the
two-stage training method of~\citet{prabhavalkar2016compression} with an indirect
variational trace norm regularization
technique~\citep{srebro2005maximum,ciliberto2017reexamining}. We describe this
technique in more detail in Section~\ref{sec:tracenormreg} and report
experimental results in Section~\ref{sec:compexpres}.

\subsection{Trace norm regularization}
\label{sec:tracenormreg}

First we introduce some notation. Let us denote by $||\cdot||_{\mathcal{T}}$ the
\textit{trace norm} of a matrix, that is, the sum of the singular values of the
matrix. The trace norm is also referred to as the \textit{nuclear norm} or the
\textit{Schatten 1-norm} in the literature. Furthermore, let us denote by
$||\cdot||_{\mathcal{F}}$ the Frobenius norm of a matrix, defined as
\begin{equation}
    || A ||_{\mathcal{F}} = \sqrt{\mathrm{Tr} A A^*} = \sqrt{\sum_{i, j} 
    |A_{ij}|^2} \,.
\end{equation}
The Frobenius norm is identical to the \textit{Schatten 2-norm} of a matrix, i.e.
the $\ell^2$ norm of the singular value vector of the matrix. The following lemma
provides a variational characterization of the trace norm in terms of the
Frobenius norm.

\begin{lemma}[\citet{jameson1987summing, ciliberto2017reexamining}]
Let $W$ be an $m \times n$ matrix and denote by $\sigma$ its vector of singular 
values. Then
\begin{equation}
\label{eq:varational-trace-norm}
  ||W||_{\mathcal{T}} := \sum_{i=1}^{\min(m, n)} \sigma_i(W)
  = \min \frac{1}{2} \left( ||U||_{\mathcal{F}}^2 + ||V||_{\mathcal{F}}^2 \right) \,,
\end{equation}
where the minimum is taken over all $U : m \times \min(m, n)$ and $V : \min(m, n)
\times n$ such that $W = UV$. Furthermore, if $W = \tilde{U} \Sigma \tilde{V}^*$
is a singular value decomposition of $W$, then equality holds
in~\eqref{eq:varational-trace-norm} for the choice $U = \tilde{U} \sqrt{\Sigma}$
and $V = \sqrt{\Sigma} \tilde{V}^*$.
\end{lemma}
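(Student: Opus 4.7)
The plan is to prove the two inequalities separately and then use the explicit factorization to exhibit equality.

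\textbf{Step 1 (the ``furthermore'' direction, $\geq$ in the variational formula).} I would first verify that the choice $U = \tilde U \sqrt{\Sigma}$, $V = \sqrt{\Sigma}\,\tilde V^*$ is admissible: with the compact SVD conventions ($\tilde U$ of size $m \times \min(m,n)$, $\tilde V$ of size $n \times \min(m,n)$, $\Sigma$ diagonal of size $\min(m,n) \times \min(m,n)$), the dimensions match and $UV = \tilde U \Sigma \tilde V^* = W$. Then compute
\begin{equation*}
\|U\|_{\mathcal{F}}^2 = \mathrm{Tr}(U^* U) = \mathrm{Tr}\bigl(\sqrt{\Sigma}\,\tilde U^* \tilde U \sqrt{\Sigma}\bigr) = \mathrm{Tr}(\Sigma) = \sum_i \sigma_i(W),
\end{equation*}
using $\tilde U^* \tilde U = I$, and analogously $\|V\|_{\mathcal{F}}^2 = \sum_i \sigma_i(W)$. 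Hence $\tfrac12(\|U\|_{\mathcal{F}}^2 + \|V\|_{\mathcal{F}}^2) = \|W\|_{\mathcal{T}}$, so the infimum on the right is at most $\|W\|_{\mathcal{T}}$ and is attained at this factorization.

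\textbf{Step 2 (the $\leq$ direction).} For an arbitrary factorization $W = UV$, apply AM-GM to get $\tfrac12(\|U\|_{\mathcal{F}}^2 + \|V\|_{\mathcal{F}}^2) \geq \|U\|_{\mathcal{F}} \|V\|_{\mathcal{F}}$. It then suffices to establish the Hölder-type bound
\begin{equation*}
\|UV\|_{\mathcal{T}} \leq \|U\|_{\mathcal{F}} \|V\|_{\mathcal{F}},
\end{equation*}
which is the Schatten-norm Cauchy–Schwarz inequality ($\|\cdot\|_1 \leq \|\cdot\|_2 \cdot \|\cdot\|_2$). Chaining these gives $\|W\|_{\mathcal{T}} \leq \tfrac12(\|U\|_{\mathcal{F}}^2 + \|V\|_{\mathcal{F}}^2)$ for every admissible $(U,V)$.

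\textbf{Step 3 (proving the Schatten Hölder inequality).} This is the only nontrivial piece. I would prove it directly from an SVD $UV = P \Lambda Q^*$ with columns $p_i$ of $P$ and $q_i$ of $Q$:
\begin{equation*}
\|UV\|_{\mathcal{T}} = \mathrm{Tr}(\Lambda) = \sum_i p_i^* (UV) q_i = \sum_i \langle U^* p_i, V q_i\rangle.
\end{equation*}
Two applications of Cauchy–Schwarz (first vectorwise, then across the index $i$) bound this by
\begin{equation*}
\sum_i \|U^* p_i\|\,\|V q_i\| \leq \Bigl(\sum_i \|U^* p_i\|^2\Bigr)^{1/2}\Bigl(\sum_i \|V q_i\|^2\Bigr)^{1/2}.
\end{equation*}
Since the $p_i$ and $q_i$ are orthonormal, $\sum_i \|U^* p_i\|^2 = \mathrm{Tr}(P^* U U^* P) \leq \mathrm{Tr}(U U^*) = \|U\|_{\mathcal{F}}^2$, and similarly for $V$, yielding the desired bound.

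\textbf{Conclusion.} Combining Steps 2 and 3 gives $\|W\|_{\mathcal{T}} \leq \tfrac12(\|U\|_{\mathcal{F}}^2 + \|V\|_{\mathcal{F}}^2)$ for every factorization, while Step 1 exhibits a factorization achieving equality; together these prove \eqref{eq:varational-trace-norm}. The main obstacle is the Schatten–Hölder bound in Step 3 (a standard but nontrivial inequality); everything else is bookkeeping with the SVD and AM-GM.
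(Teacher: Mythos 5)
Your proof is correct, but there is nothing in the paper to compare it against: the paper states this lemma with a citation to \citet{jameson1987summing} and \citet{ciliberto2017reexamining} and never proves it (the only proof in the paper is of the unrelated Proposition~\ref{prop:nondimTM} on the nondimensional trace norm coefficient). Your argument is the standard self-contained one and all three steps check out. Step 1 correctly verifies attainment, which justifies writing $\min$ rather than $\inf$. In Step 3 the only point worth spelling out is the final inequality $\mathrm{Tr}(P^* U U^* P) \leq \mathrm{Tr}(U U^*)$: it holds because $P$ has orthonormal columns, so $P P^*$ is an orthogonal projection and $\mathrm{Tr}(U U^* P P^*) \leq \mathrm{Tr}(U U^*)$ for the positive semidefinite matrix $U U^*$; you assert this correctly but a referee would want the one-line justification. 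One cosmetic mismatch: the lemma as stated restricts the inner dimension of the factorization to $\min(m,n)$, while your Steps 2--3 prove the lower bound for factorizations of arbitrary inner dimension; this is harmless (it proves a stronger statement, and your Step 1 witness lies in the restricted class), but you could note it. Your proof would be a reasonable thing to include if the paper wanted to be self-contained.
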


The procedure to take advantage of this characterization is as follows.
First, for each large GEMM in the model, replace the $m \times n$ weight matrix $W$ by
the product $W = UV$ where $U : m \times \min(m, n)$ and $V : \min(m, n) \times
n$. Second, replace the original loss function $\ell(W)$ by
\begin{equation}\label{eq:modified-loss}
  \ell(U V) + \frac{1}{2} \lambda \left(||U||_{\mathcal{F}}^2 + 
  ||V||_{\mathcal{F}}^2 \right) \,.
\end{equation}
where $\lambda$ is a hyperparameter controlling the strength of the approximate
trace norm regularization. Proposition~1 in~\citet{ciliberto2017reexamining}
guarantees that minimizing the modified loss equation~\eqref{eq:modified-loss} is
equivalent to minimizing the actual trace norm regularized loss:
\begin{equation}
    \ell(W) + \lambda ||W||_{\mathcal{T}} \,.
\end{equation}

In Section~\ref{sec:stage1} we show empirically that use of the modified
loss~\eqref{eq:modified-loss} is indeed highly effective at reducing the trace
norm of the weight matrices.

To summarize, we propose the following basic training scheme:
\begin{itemize}
\item \textbf{Stage 1:}
    \begin{itemize}
        \item For each large GEMM in the model, replace the $m \times n$ weight 
        matrix $W$ by the product $W = UV$ where $U : m \times r$, $V : r \times n$, 
        and $r = \min(m, n)$.
        \item Replace the original loss function $\ell(W)$ by
            \begin{equation}
                \ell(U V) + \frac{1}{2} \lambda \left( ||U||_{\mathcal{F}}^2 +
                ||V||_{\mathcal{F}}^2 \right) \,,
            \end{equation}
            where $\lambda$ is a hyperparameter controlling the strength of the
            trace norm regularization.
        \item Train the model to convergence.
    \end{itemize}
\item \textbf{Stage 2:}
    \begin{itemize}
        \item For the trained model from stage 1, recover $W = UV$ by multiplying
        the two trained matrices $U$ and $V$.
        \item Train low rank models warmstarted from the truncated SVD of $W$. By
        varying the number of singular values retained, we can control the
        parameter versus accuracy trade-off.
    \end{itemize}
\end{itemize}

One modification to this is described in Section~\ref{sec:traintime}, where we
show that it is actually not necessary to train the stage 1 model to convergence
before switching to stage 2. By making the transition earlier, training time can
be substantially reduced.

\subsection{Experiments and results}
\label{sec:compexpres}

We report here the results of our experiments related to trace norm
regularization. Our baseline model is a forward-only Deep Speech
2 model, and we train
and evaluate on the widely used Wall Street Journal (WSJ) speech corpus.
Except for a few minor modifications described in Appendix~\ref{sec:othermethods},
we follow closely the original paper describing this 
architecture~\citep{amodei2016deep}, and we refer the reader to that paper for details
on the inputs, outputs, exact layers used, training methodology, and so on.
For the purposes of this paper, suffice it to say that the parameters and computation
are dominated by three GRU layers and a fully connected layer. It is these four layers
that we compress through low-rank factorization. As described in 
Appendix~\ref{sec:lrparamsharing}, in our factorization scheme, each GRU layer involves
two matrix multiplications: a \textit{recurrent} and a \textit{non-recurrent} one.
For a simple recurrent layer, we would write
\begin{equation}
    h_t = f(W_{nonrec} x_t + W_{rec} h_{t-1}) \,.
\end{equation}
For a GRU layer, there are also weights for reset and update gates, which we group
with the \textit{recurrent} matrix. See Appendix~\ref{sec:lrparamsharing} for details
and the motivation for this split.

Since our work focuses only on compressing acoustic models and not language
models, the error metric we report is the character error rate (CER) rather than
word error rate (WER). As the size and latency constraints vary widely across 
devices, whenever possible we compare techniques by comparing their accuracy versus 
number of parameter trade-off curves. All CERs reported here
are computed on a validation set separate from the training set.

\subsubsection{Stage 1 experiments}
\label{sec:stage1}

In this section, we investigate the effects of training with the modified loss
function in~\eqref{eq:modified-loss}. For simplicity, we refer to this as
\textit{trace norm regularization}.

As the WSJ corpus is relatively small at around 80 hours of speech, models tend
to benefit substantially from regularization. To make comparisons more fair, we
also trained unfactored models with an $\ell^2$ regularization term and searched
the hyperparameter space just as exhaustively.

For both trace norm and $\ell^2$ regularization, we found it beneficial to
introduce separate $\lambda_{rec}$ and $\lambda_{nonrec}$ parameters for
determining the strength of regularization for the recurrent and non-recurrent
weight matrices, respectively. In addition to $\lambda_{rec}$ and
$\lambda_{nonrec}$ in initial experiments, we also roughly tuned the learning
rate. Since the same learning rate was found to be optimal for nearly all
experiments, we just used that for all the experiments reported in this section.
The dependence of final CER on $\lambda_{rec}$ and $\lambda_{nonrec}$ is shown in
Figure~\ref{fig:cerheatmap}. Separate $\lambda_{rec}$ and $\lambda_{nonrec}$ values
are seen to help for both trace norm and $\ell^2$ regularization. However, for trace norm 
regularization, it appears better to fix $\lambda_{rec}$ as a multiple of 
$\lambda_{nonrec}$ rather than tuning the two parameters independently.

\begin{figure}
\centering
\includegraphics[scale=\figscale]{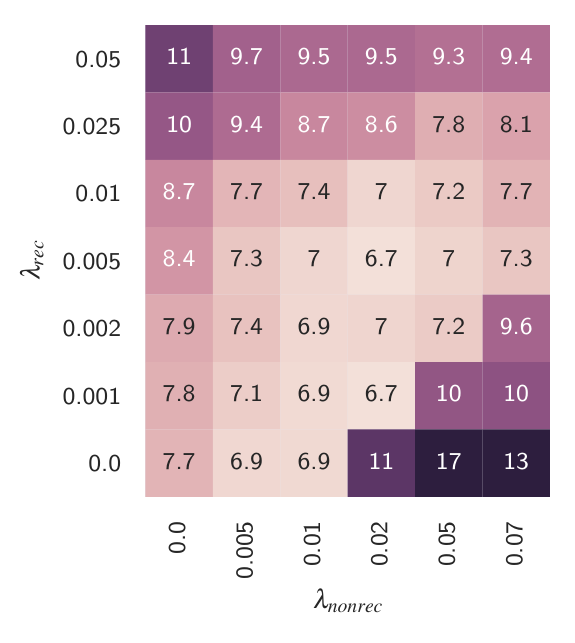}\quad
\includegraphics[scale=\figscale]{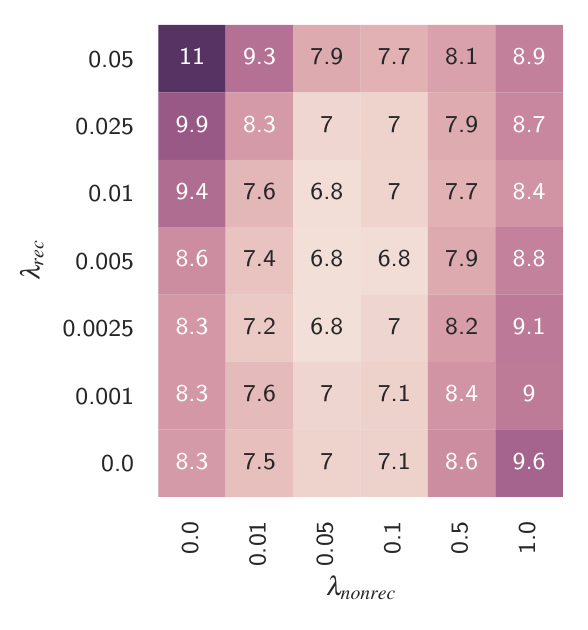}
\caption{CER dependence on $\lambda_{rec}$ and $\lambda_{nonrec}$ for trace norm
regularization (left) and $\ell^2$ regularization (right).}
\label{fig:cerheatmap}
\end{figure}

The first question we are interested in is whether our modified
loss~\eqref{eq:modified-loss} is really effective at reducing the trace norm. As
we are interested in the relative concentration of singular values rather than
their absolute magnitudes, we introduce the following nondimensional metric.

\begin{definition}\label{def:nondimTM}
Let $W$ be a nonzero $m \times n$ matrix with $d = \min(m, n) \geq 2$. Denote by
$\sigma$ the $d$-dimensional vector of singular values of $W$. Then we define the
\textit{nondimensional trace norm coefficient} of $W$ as follows:
\begin{equation}
\label{eq:nondim-TN}
    \nu(W) := \frac{\frac{||\sigma||_{\ell^1}}{||\sigma||_{\ell^2}} - 1}{\sqrt{d} - 1} \,.
\end{equation}
\end{definition}

We show in Appendix~\ref{sec:trcoeff} that $\nu$ is scale-invariant and ranges
from 0 for rank 1 matrices to 1 for maximal-rank matrices with all singular
values equal. Intuitively, the smaller $\nu(W)$, the better $W$ can be
approximated by a low rank matrix.

As shown in Figure~\ref{fig:l1byl2vslambda}, trace norm regularization is indeed
highly effective at reducing the nondimensional trace norm coefficient compared
to $\ell^2$ regularization. At very high regularization strengths, $\ell^2$
regularization also leads to small $\nu$ values. However, from 
Figure~\ref{fig:cerheatmap} it is apparent that this comes at the expense of
relatively high CERs. As shown in Figure~\ref{fig:rankvscer}, this translates
into requiring a much lower rank for the truncated SVD to explain, say, 90 \% of
the variance of the weight matrix for a given CER. Although a few
$\ell^2$-regularized models occasionally achieve low rank, we observe this only
at relatively high CER's and only for some of the weights. Note also that
some form of regularization is very important on this dataset. The unregularized
baseline model (the green points in Figure 3) achieves relatively low CER.

\begin{figure}[ptb]
\centering
\includegraphics[scale=\figscale]{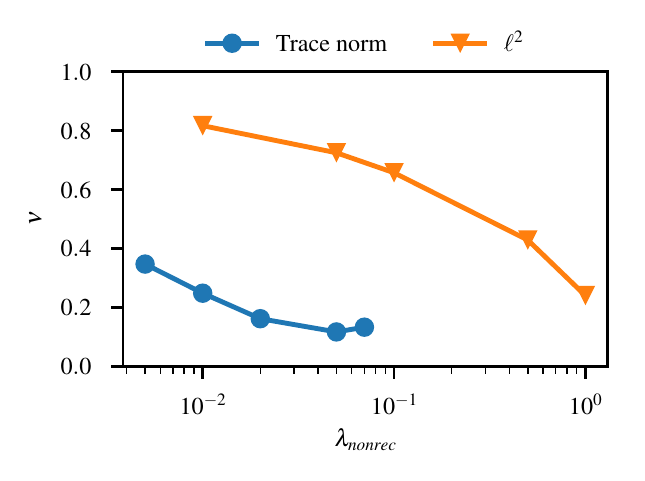}\quad
\includegraphics[scale=\figscale]{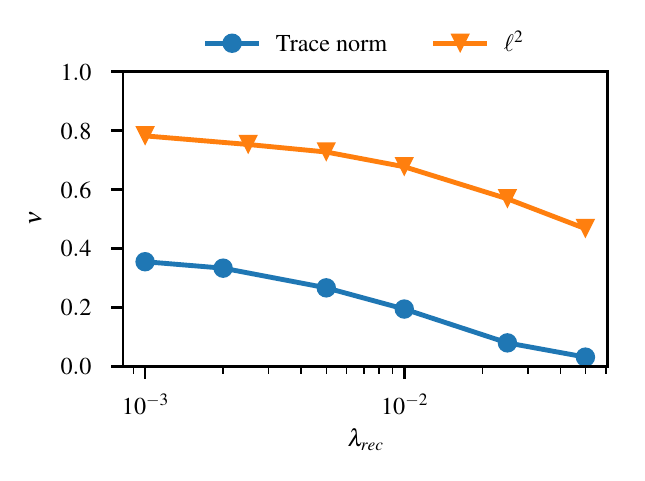}
\caption{Nondimensional trace norm coefficient versus strength of regularization
by type of regularization used during training. On the left are the results for
the non-recurrent weight of the third GRU layer, with $\lambda_{rec} = 0$. On the
right are the results for the recurrent weight of the third GRU layer, with
$\lambda_{nonrec} = 0$.  The plots for the other weights are similar.}
\label{fig:l1byl2vslambda}
\end{figure}

\begin{figure}[ptb]
\centering
\includegraphics[scale=\figscale]{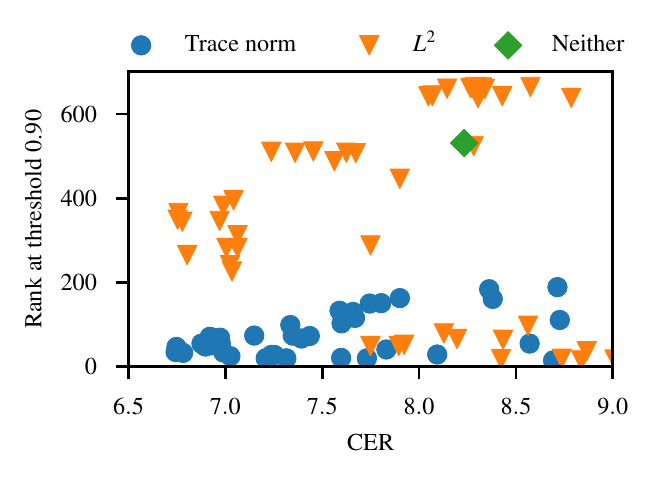}\quad
\includegraphics[scale=\figscale]{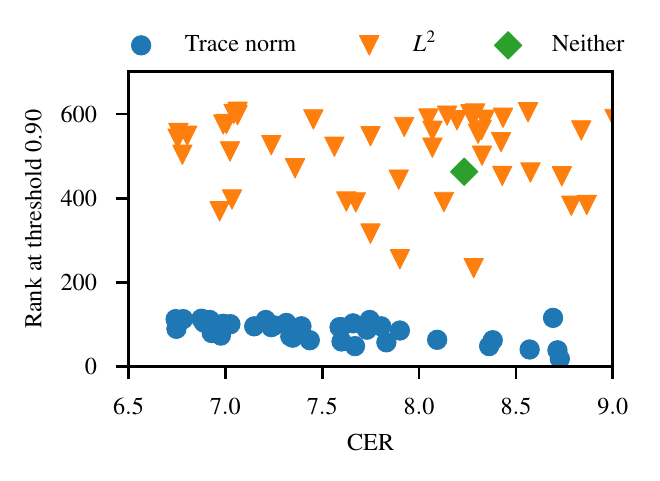}
\caption{The truncated SVD rank required to explain 90 \% of the variance of the
weight matrix versus CER by type of regularization used during training. Shown
here are results for the non-recurrent (left) and recurrent (right) weights of
the third GRU layer. The plots for the other weights are similar.}
\label{fig:rankvscer}
\end{figure}

\begin{figure}[ptb]
\centering
\includegraphics[scale=\figscale]{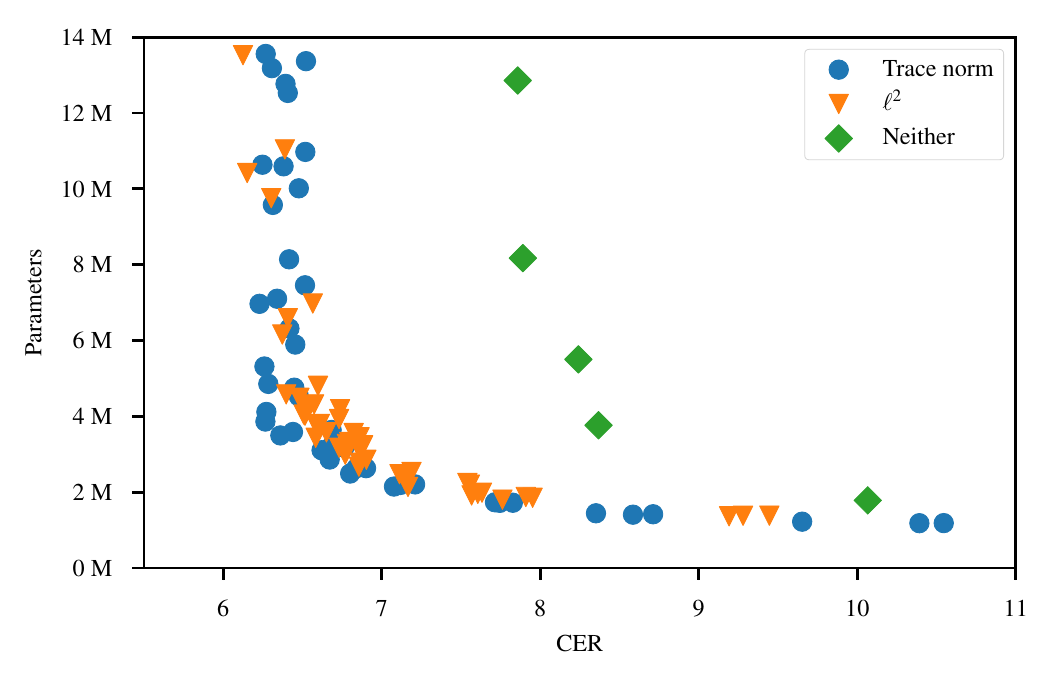}
\caption{Number of parameters versus CER of stage 2 models colored by the type of
regularization used for training the stage 1 model.}
\label{fig:paramvscer}
\end{figure}

\subsubsection{Stage 2 experiments}
\label{sec:stage2}

In this section, we report the results of stage 2 experiments warmstarted from
either trace norm or $L^2$ regularized stage 1 models.

For each regularization type, we took the three best stage 1 models (in terms of
final CER: all were below 6.8) and used the truncated SVD of their weights to
initialize the weights of
stage 2 models. By varying the threshold of variance explained for the SVD
truncation, each stage 1 model resulted into multiple stage 2 models. The stage 2
models were trained without regularization (i.e., $\lambda_{rec} =
\lambda_{nonrec} = 0$) and with the initial learning rate set to three times the
final learning rate of the stage 1 model. 

As shown in Figure~\ref{fig:paramvscer}, the best models from either trace norm
or $L^2$ regularization exhibit similar accuracy versus number of parameter
trade-offs. For comparison, we also warmstarted some stage 2 models from an
unregularized stage 1 model. These models are seen to have significantly lower
accuracies, accentuating the need for regularization on the WSJ corpus.

\subsubsection{Reducing training time}
\label{sec:traintime}

In the previous sections, we trained the stage 1 models for 40 epochs to full
convergence and then trained the stage 2 models for another 40 epochs, again to
full convergence. Since the stage 2 models are drastically smaller than the stage
1 models, it takes less time to train them. Hence, shifting the stage 1 to stage
2 transition point to an earlier epoch could substantially reduce training time.
In this section, we show that it is indeed possible to do so without hurting
final accuracy.

Specifically, we took the stage 1 trace norm and $\ell^2$ models from
Section~\ref{sec:stage1} that resulted in the best stage 2 models in
Section~\ref{sec:stage2}. In that section, we were interested in the parameters
vs accuracy trade-off and used each stage 1 model to warmstart a number of stage
2 models of different sizes. In this section, we instead set a fixed target of 3
M parameters and a fixed overall training budget of 80 epochs but vary the stage
1 to stage 2 transition epoch. For each of the stage 2 runs, we initialize the
learning rate with the learning rate of the stage 1 model at the transition
epoch. So the learning rate follows the same schedule as if we had trained a
single model for 80 epochs. As before, we disable all regularization for stage 2.

The $\ell^2$ stage 1 model has 21.7 M parameters, whereas the trace norm stage 1
model at 29.8 M parameters is slightly larger due to the factorization. Since the
stage 2 models have roughly 3 M parameters and the training time is approximately
proportional to the number of parameters, stage 2 models train about 7x and 10x
faster, respectively, than the $\ell^2$ and trace norm stage 1 models.
Consequently, large overall training time reductions can be achieved by reducing
the number of epochs spent in stage 1 for both $\ell^2$ and trace norm.

The results are shown in Figure~\ref{fig:traintime}.  Based on the left panel, it
is evident that we can lower the transition epoch number without hurting the
final CER. In some cases, we even see marginal CER improvements. For transition
epochs of at least 15, we also see slightly better results for trace norm than
$\ell^2$. In the right panel, we plot the convergence of CER when the transition
epoch is 15. We find that the trace norm model's CER is barely impacted by the
transition whereas the $\ell^2$ models see a huge jump in CER at the transition
epoch. Furthermore, the plot suggests that a total of 60 epochs may have
sufficed. However, the savings from reducing stage 2 epochs are negligible
compared to the savings from reducing the transition epoch.

\begin{figure}
\centering
\includegraphics[scale=\figscale]{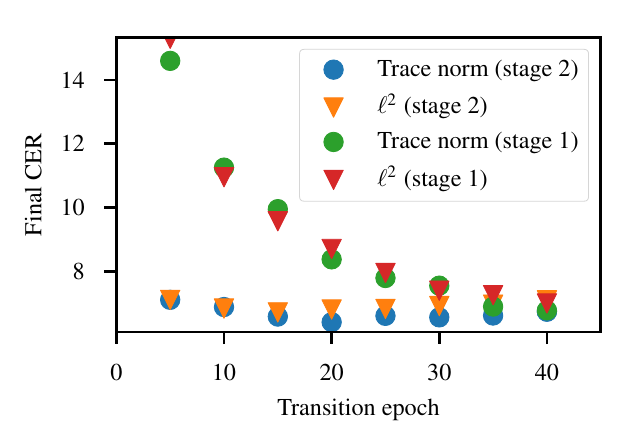}\quad
\includegraphics[scale=\figscale]{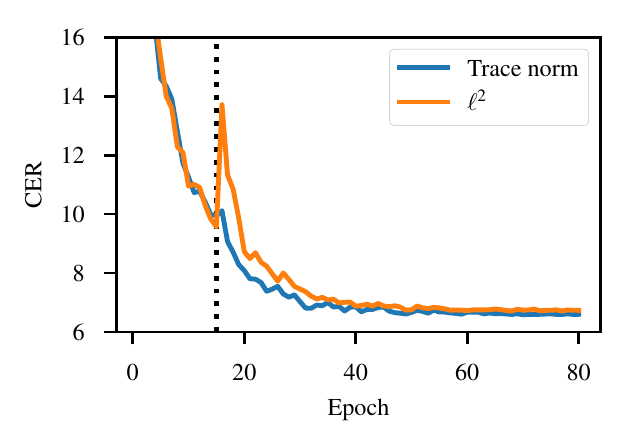}
\caption{\textit{Left:} CER versus transition epoch, colored by the type of 
regularization used for training the stage 1 model. \textit{Right:} CER as 
training progresses colored by the type of regularization used in stage 1. 
The dotted line indicates the transition epoch.}
\label{fig:traintime}
\end{figure}

\section{Application to production-grade embedded speech recognition}
\label{sec:fastkernels}

With low rank factorization techniques similar\footnote{This work was done prior
to the development of our trace norm regularization. Due to long training cycles
for the 10,000+ hours of speech used in this section, we started from pretrained
models. However, the techniques in this section are entirely agnostic to such
differences.} to those described in Section~\ref{sec:modelcomp}, we were able to
train large vocabulary continuous speech recognition (LVCSR) models with
acceptable numbers of parameters and acceptable loss of accuracy compared to a
production server model (baseline). Table~\ref{tab:finalam} shows the baseline
along with three different compressed models with much lower number of
parameters. The tier-3 model employs the techniques of Sections~\ref{sec:gramctc}
and~\ref{sec:mel}. Consequently, it runs significantly faster than the tier-1
model, even though they have a similar number of parameters. Unfortunately, this
comes at the expense of some loss in accuracy.
 
\begin{table}[t]
\caption{WER of three tiers of low rank speech recognition models and a
production server model on an internal test set. This table illustrates the
effect of shrinking just the acoustic model. The same large server-grade language
model was used for all rows.}
\label{tab:finalam}
\begin{center}
\begin{tabular}{lSSS[table-format=2.1]}
    \toprule
    {Model}   & {Parameters (M)} & {WER} & {\% Relative}      \\ \midrule
    baseline  & 115.5            & 8.78  & 0.0\si{\percent}   \\
    tier-1    & 14.9             & 9.25  & -5.4\si{\percent}  \\
    tier-2    & 10.9             & 9.80  & -11.6\si{\percent} \\
    tier-3*    & 14.7            & 9.92  & -13.0\si{\percent} \\
    \bottomrule
\multicolumn{4}{l}{\textit{{*} The tier-3 model is larger but faster
than the tier-2 model. See main text for details.}}
\end{tabular}
\end{center}
\end{table}

Although low rank factorization significantly reduces the overall computational
complexity of our LVCSR system, we still require further optimization to achieve
real-time inference on mobile or embedded devices. One approach to speeding up
the network is to use low-precision $8$-bit integer representations for weight
matrices and matrix multiplications (the GEMM operation in BLAS terminology).
This type of quantization after training reduces both memory as well as
computation requirements
of the network while only introducing $2\%$ to $4\%$ relative increase in WER.
Quantization for embedded speech recognition has also been previously 
studied in~\citep{alvarez2016efficient,vanhoucke2011improving}, and it may be possible
to reduce the relative WER increase by quantizing the forward passes during
training~\citep{alvarez2016efficient}. As the relative WER losses from compressing
the acoustic and language models were much larger for us, we did not pursue this
further for the present study.

To perform low precision matrix multiplications, we originally used the
\textit{gemmlowp} library, which provides state-of-the-art low precision GEMMs
using unsigned 8-bit integer values~\citep{gemmlowp}.  However, gemmlowp's
approach is not efficient for small batch  sizes. Our application, LVCSR on
embedded devices with single user, is dominated by low batch size GEMMs due to
the sequential nature of recurrent layers and latency constraints. This can be
demonstrated by looking at a simple RNN cell which has the form:
\begin{equation}
    h_t = f(Wx_t + Uh_{t-1})
    \label{eq:simpleRNN}
\end{equation}
This cell contains two main GEMMs: The first, $Uh_{t-1}$, is sequential and
requires a GEMM with batch size 1. The second, $Wx_t$, can in principle be
performed at higher batch sizes by batching across time. However, choosing a too
large batch sizes can significantly delay the output, as the system needs to wait
for more future context. In practice, we found that batch sizes higher than
around 4 resulted in too high latencies, negatively impacting user experience.

This motivated us to implement custom assembly kernels for the 64-bit ARM
architecture (AArch64, also known as ARMv8 or ARM64) to further improve the
performance of the GEMMs operations. We do not go through the methodological
details in this paper. Instead, we are making the kernels and implementation
details available at \url{https://github.com/paddlepaddle/farm}. 

Figure~\ref{fig:kernel_perf_bw} compares the performance of our implementation
(denoted by {\it farm}) with the gemmlowp library for matrix multiplication on 
iPhone 7, iPhone 6, and Raspberry Pi 3 Model B. The farm kernels are
significantly  faster than their gemmlowp counterparts for batch sizes 1 to 4.
The peak single-core theoretical performance for iPhone 7, iPhone 6, and
Raspberry Pi 3 are $56.16$, $22.4$ and $9.6$ Giga Operations per Second, 
respectively. The gap between the theoretical and achieved values are mostly due
to kernels being limited by memory bandwidth. For a more detailed analysis, we
refer to the farm website.

\begin{figure}
\centering
{
\includegraphics[scale=\figscale]{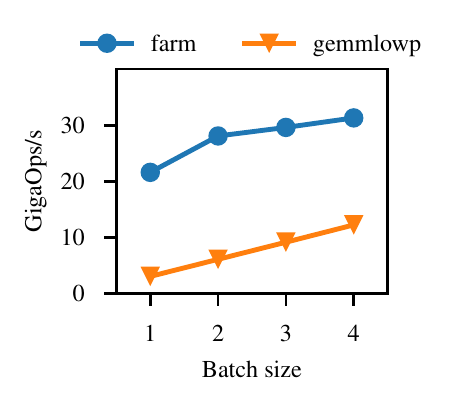}
\includegraphics[scale=\figscale]{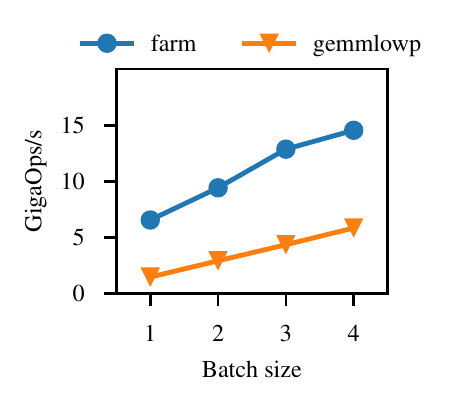}
\includegraphics[scale=\figscale]{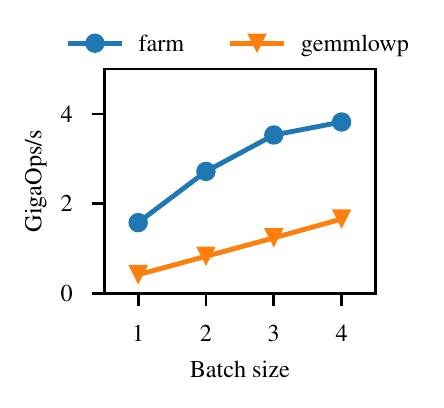}
}
\caption{Comparison of our kernels ({\it farm}) and the gemmlowp library for
matrix multiplication on iPhone 7 (left), iPhone 6 (middle), and Raspberry Pi 3
(right). The benchmark computes $Ax = b$ where $A$ is a random matrix with
dimension $6144 \times 320$, and $x$ is a random matrix with dimension $320 
\times$ batch size. All matrices are in unsigned 8-bit integer format.}
\label{fig:kernel_perf_bw}
\end{figure}

In addition to low precision representation and customized ARM kernels, we
explored other approaches to speed up our LVCSR system. These techniques are
described in Appendix~\ref{sec:othermethods}.

Finally, by combining low rank factorization, some techniques from 
Appendix~\ref{sec:othermethods}, int8 quantization and the farm kernels, as well
as using smaller language models, we could create a range of speech recognition
models suitably tailored to various devices. These are shown in
Table~\ref{tab:finalmodels}.

\begin{table}[t]
\caption{Embedded speech recognition models.}
\label{tab:finalmodels}
\begin{center}
\begin{tabular}{llS[table-format=5.0]SS[table-format=2.1]SS[table-format=2.1]}
    \toprule
              &            & {Language} &    &               &                
              & {\% time spent} \\
              & {Acoustic} & {model}    &    &               & {Speedup over} 
              & {in acoustic}\\
     {Device} & {model}    & {size (MB)}     & {WER} & {\% Relative} & {real-time} 
     & {model} \\
    \midrule
    GPU server     & baseline       & 13764      & 8.78  & 0.0\si{\percent}     
    & 10.39x     & 70.8\si{\percent}            \\
    iPhone 7       & tier-1         & 56         & 10.50 & -19.6\si{\percent}   
    & 2.21x          & 65.2\si{\percent}     \\
    iPhone 6       & tier-2         & 32         & 11.19 & -27.4\si{\percent}    
    & 1.13x          & 75.5\si{\percent}     \\
    Raspberry Pi 3 & tier-3         & 14         & 12.08     &  -37.6\si{\percent} 
    & 1.08x       & 86.3\si{\percent}                  \\
\bottomrule
\end{tabular}
\end{center}
\end{table}

\section{Conclusion}
\label{sec:conclusion}

We worked on compressing and reducing the inference latency of LVCSR speech
recognition models. To better compress models, we introduced a trace norm
regularization technique and demonstrated its potential for faster
training of low rank models on the WSJ speech corpus. To reduce
latency at inference time, we demonstrated the importance of optimizing for low
batch sizes and released optimized kernels for the ARM64 platform. Finally, by
combining the various techniques in this paper, we demonstrated an effective path
towards production-grade on-device speech recognition on a range of embedded
devices.

\subsubsection*{Acknowledgments}

We would like to thank Gregory Diamos, Christopher Fougner, Atul Kumar,
Julia Li, Sharan Narang, Thuan Nguyen, Sanjeev Satheesh, Richard Wang,
Yi Wang, and Zhenyao Zhu for their helpful comments and assistance with
various parts of this paper. We also thank anonymous referees for their
comments that greatly improved the exposition and helped uncover a mistake
in an earlier version of this paper.

\bibliography{ondevice_asr}

\begin{thebibliography}{27}
\providecommand{\natexlab}[1]{#1}
\providecommand{\url}[1]{\texttt{#1}}
\expandafter\ifx\csname urlstyle\endcsname\relax
  \providecommand{\doi}[1]{doi: #1}\else
  \providecommand{\doi}{doi: \begingroup \urlstyle{rm}\Url}\fi

\bibitem[Alvarez et~al.(2016)Alvarez, Prabhavalkar, and
  Bakhtin]{alvarez2016efficient}
Raziel Alvarez, Rohit Prabhavalkar, and Anton Bakhtin.
\newblock On the efficient representation and execution of deep acoustic
  models.
\newblock In \emph{Proceedings of Annual Conference of the International Speech
  Communication Association (Interspeech)}, 2016.
\newblock URL \url{https://arxiv.org/abs/1607.04683}.

\bibitem[Amodei et~al.(2016)Amodei, Ananthanarayanan, Anubhai, Bai, Battenberg,
  Case, Casper, Catanzaro, Cheng, Chen, et~al.]{amodei2016deep}
Dario Amodei, Sundaram Ananthanarayanan, Rishita Anubhai, Jingliang Bai, Eric
  Battenberg, Carl Case, Jared Casper, Bryan Catanzaro, Qiang Cheng, Guoliang
  Chen, et~al.
\newblock {D}eep {S}peech 2: End-to-end speech recognition in {E}nglish and
  {M}andarin.
\newblock In \emph{International Conference on Machine Learning}, pp.\
  173--182, 2016.

\bibitem[Ba \& Caruana(2014)Ba and Caruana]{ba2014deep}
Jimmy Ba and Rich Caruana.
\newblock Do deep nets really need to be deep?
\newblock In \emph{Advances in neural information processing systems}, pp.\
  2654--2662, 2014.

\bibitem[Chen et~al.(2015)Chen, Wilson, Tyree, Weinberger, and
  Chen]{chen2015compressing}
Wenlin Chen, James Wilson, Stephen Tyree, Kilian Weinberger, and Yixin Chen.
\newblock Compressing neural networks with the hashing trick.
\newblock In \emph{International Conference on Machine Learning}, pp.\
  2285--2294, 2015.

\bibitem[Cho et~al.(2014)Cho, van Merri{\"e}nboer, Bahdanau, and
  Bengio]{cho2014properties}
Kyunghyun Cho, Bart van Merri{\"e}nboer, Dzmitry Bahdanau, and Yoshua Bengio.
\newblock On the properties of neural machine translation: Encoder-decoder
  approaches.
\newblock \emph{Syntax, Semantics and Structure in Statistical Translation},
  pp.\  103, 2014.

\bibitem[Chung et~al.(2014)Chung, Gulcehre, Cho, and
  Bengio]{chung2014empirical}
Junyoung Chung, Caglar Gulcehre, KyungHyun Cho, and Yoshua Bengio.
\newblock Empirical evaluation of gated recurrent neural networks on sequence
  modeling.
\newblock \emph{arXiv preprint arXiv:1412.3555}, 2014.

\bibitem[Ciliberto et~al.(2017)Ciliberto, Stamos, and
  Pontil]{ciliberto2017reexamining}
Carlo Ciliberto, Dimitris Stamos, and Massimiliano Pontil.
\newblock Reexamining low rank matrix factorization for trace norm
  regularization.
\newblock \emph{arXiv preprint arXiv:1706.08934}, 2017.

\bibitem[Denil et~al.(2013)Denil, Shakibi, Dinh, de~Freitas,
  et~al.]{denil2013predicting}
Misha Denil, Babak Shakibi, Laurent Dinh, Nando de~Freitas, et~al.
\newblock Predicting parameters in deep learning.
\newblock In \emph{Advances in Neural Information Processing Systems}, pp.\
  2148--2156, 2013.

\bibitem[Denton et~al.(2014)Denton, Zaremba, Bruna, LeCun, and
  Fergus]{denton2014exploiting}
Emily~L Denton, Wojciech Zaremba, Joan Bruna, Yann LeCun, and Rob Fergus.
\newblock Exploiting linear structure within convolutional networks for
  efficient evaluation.
\newblock In \emph{Advances in Neural Information Processing Systems}, pp.\
  1269--1277, 2014.

\bibitem[Han et~al.(2016)Han, Mao, and Dally]{han2015deep}
Song Han, Huizi Mao, and William~J Dally.
\newblock Deep compression: Compressing deep neural networks with pruning,
  trained quantization and huffman coding.
\newblock In \emph{International Conference on Learning Representations
  (ICLR)}, 2016.

\bibitem[Iandola et~al.(2016)Iandola, Han, Moskewicz, Ashraf, Dally, and
  Keutzer]{iandola2016squeezenet}
Forrest~N Iandola, Song Han, Matthew~W Moskewicz, Khalid Ashraf, William~J
  Dally, and Kurt Keutzer.
\newblock {SqueezeNet}: {AlexNet}-level accuracy with 50x fewer parameters and<
  0.5 {MB} model size.
\newblock \emph{arXiv preprint arXiv:1602.07360}, 2016.

\bibitem[Jacob \& Warden(2015--2017)Jacob and Warden]{gemmlowp}
Benoit Jacob and Pete Warden.
\newblock {gemmlowp}: a small self-contained low-precision {GEMM} library.
\newblock \url{https://github.com/google/gemmlowp}, 2015--2017.

\bibitem[Jameson(1987)]{jameson1987summing}
Graham James~Oscar Jameson.
\newblock \emph{Summing and nuclear norms in Banach space theory}, volume~8.
\newblock Cambridge University Press, 1987.

\bibitem[Koren et~al.(2009)Koren, Bell, and Volinsky]{koren2009matrix}
Yehuda Koren, Robert Bell, and Chris Volinsky.
\newblock Matrix factorization techniques for recommender systems.
\newblock \emph{Computer}, 42\penalty0 (8), 2009.

\bibitem[Kuchaiev \& Ginsburg(2017)Kuchaiev and
  Ginsburg]{kuchaiev2017factorization}
Oleksii Kuchaiev and Boris Ginsburg.
\newblock Factorization tricks for {LSTM} networks.
\newblock \emph{arXiv preprint arXiv:1703.10722}, 2017.

\bibitem[LeCun et~al.(1989)LeCun, Denker, Solla, Howard, and
  Jackel]{lecun1989optimal}
Yann LeCun, John~S Denker, Sara~A Solla, Richard~E Howard, and Lawrence~D
  Jackel.
\newblock Optimal brain damage.
\newblock In \emph{Advances in Neural Information Processing Systems}, pp.\
  598--605, 1989.

\bibitem[Liu et~al.(2017)Liu, Zhu, Li, and Satheesh]{liu2017gram}
Hairong Liu, Zhenyao Zhu, Xiangang Li, and Sanjeev Satheesh.
\newblock {Gram-CTC}: Automatic unit selection and target decomposition for
  sequence labelling.
\newblock \emph{arXiv preprint arXiv:1703.00096}, 2017.

\bibitem[Lu et~al.(2016)Lu, Sindhwani, and Sainath]{lu2016learning}
Zhiyun Lu, Vikas Sindhwani, and Tara~N Sainath.
\newblock Learning compact recurrent neural networks.
\newblock In \emph{Acoustics, Speech and Signal Processing (ICASSP), 2016 IEEE
  International Conference on}, pp.\  5960--5964. IEEE, 2016.

\bibitem[McGraw et~al.(2016)McGraw, Prabhavalkar, Alvarez, Arenas, Rao, Rybach,
  Alsharif, Sak, Gruenstein, Beaufays, et~al.]{mcgraw2016personalized}
Ian McGraw, Rohit Prabhavalkar, Raziel Alvarez, Montse~Gonzalez Arenas,
  Kanishka Rao, David Rybach, Ouais Alsharif, Ha{\c{s}}im Sak, Alexander
  Gruenstein, Fran{\c{c}}oise Beaufays, et~al.
\newblock Personalized speech recognition on mobile devices.
\newblock In \emph{Acoustics, Speech and Signal Processing (ICASSP), 2016 IEEE
  International Conference on}, pp.\  5955--5959. IEEE, 2016.

\bibitem[Narang et~al.(2017)Narang, Diamos, Sengupta, and
  Elsen]{narang2017exploring}
Sharan Narang, Gregory Diamos, Shubho Sengupta, and Erich Elsen.
\newblock Exploring sparsity in recurrent neural networks.
\newblock In \emph{International Conference on Learning Representations
  (ICLR)}, 2017.

\bibitem[Neyshabur et~al.(2015)Neyshabur, Tomioka, and
  Srebro]{neyshabur2014search}
Behnam Neyshabur, Ryota Tomioka, and Nathan Srebro.
\newblock In search of the real inductive bias: On the role of implicit
  regularization in deep learning.
\newblock In \emph{Workshop track ICLR}, 2015.
\newblock arXiv preprint arXiv:1412.6614.

\bibitem[Prabhavalkar et~al.(2016)Prabhavalkar, Alsharif, Bruguier, and
  McGraw]{prabhavalkar2016compression}
Rohit Prabhavalkar, Ouais Alsharif, Antoine Bruguier, and Ian McGraw.
\newblock On the compression of recurrent neural networks with an application
  to {LVCSR} acoustic modeling for embedded speech recognition.
\newblock In \emph{Acoustics, Speech and Signal Processing (ICASSP), 2016 IEEE
  International Conference on}, pp.\  5970--5974. IEEE, 2016.

\bibitem[Sainath et~al.(2013)Sainath, Kingsbury, Sindhwani, Arisoy, and
  Ramabhadran]{sainath2013low}
Tara~N Sainath, Brian Kingsbury, Vikas Sindhwani, Ebru Arisoy, and Bhuvana
  Ramabhadran.
\newblock Low-rank matrix factorization for deep neural network training with
  high-dimensional output targets.
\newblock In \emph{Acoustics, Speech and Signal Processing (ICASSP), 2013 IEEE
  International Conference on}, pp.\  6655--6659. IEEE, 2013.

\bibitem[Sindhwani et~al.(2015)Sindhwani, Sainath, and
  Kumar]{sindhwani2015structured}
Vikas Sindhwani, Tara Sainath, and Sanjiv Kumar.
\newblock Structured transforms for small-footprint deep learning.
\newblock In \emph{Advances in Neural Information Processing Systems}, pp.\
  3088--3096, 2015.

\bibitem[Srebro et~al.(2005)Srebro, Rennie, and Jaakkola]{srebro2005maximum}
Nathan Srebro, Jason Rennie, and Tommi~S Jaakkola.
\newblock Maximum-margin matrix factorization.
\newblock In \emph{Advances in neural information processing systems}, pp.\
  1329--1336, 2005.

\bibitem[Vanhoucke et~al.(2011)Vanhoucke, Senior, and
  Mao]{vanhoucke2011improving}
Vincent Vanhoucke, Andrew Senior, and Mark~Z Mao.
\newblock Improving the speed of neural networks on cpus.
\newblock In \emph{Proc. Deep Learning and Unsupervised Feature Learning NIPS
  Workshop}, volume~1, pp.\ ~4, 2011.

\bibitem[Xue et~al.(2013)Xue, Li, and Gong]{xue2013restructuring}
Jian Xue, Jinyu Li, and Yifan Gong.
\newblock Restructuring of deep neural network acoustic models with singular
  value decomposition.
\newblock In \emph{Interspeech}, pp.\  2365--2369, 2013.

\end{thebibliography}
\bibliographystyle{iclr2018_conference}

\appendix

\section{Nondimensional trace norm coefficient}
\label{sec:trcoeff}

In this section, we describe some of the properties of the non-dimensional trace
norm coefficient defined in Section~\ref{sec:tracenormreg}.

\begin{proposition}\label{prop:nondimTM}
Let $W, d, \sigma$ be as in Definition~\ref{def:nondimTM}. Then
\begin{enumerate}[(i)]
    \item\label{trc1} $\nu(c W) = \nu(W)$ for all scalars $c \in \mathbb{R} 
    \setminus \{ 0 \}$.
    \item\label{trc2} $0 \leq \nu(W) \leq 1$.
    \item\label{trc3} $\nu(W) = 0$ if and only if $W$ has rank 1.
    \item\label{trc4} $\nu(W) = 1$ if and only if $W$ has maximal rank and
    all singular values are equal.
\end{enumerate}
\end{proposition}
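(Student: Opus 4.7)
The plan is to reduce all four claims to statements about the ratio
\[
R(\sigma) := \frac{\|\sigma\|_{\ell^1}}{\|\sigma\|_{\ell^2}}
\]
for a nonzero, nonnegative $d$-vector $\sigma$, since $\nu(W) = (R(\sigma)-1)/(\sqrt{d}-1)$ depends on $W$ only through $R(\sigma)$. The key fact I will establish is that $R$ takes values exactly in $[1,\sqrt{d}]$, with the two endpoints characterized cleanly: $R(\sigma)=1$ iff $\sigma$ has exactly one nonzero entry, and $R(\sigma)=\sqrt{d}$ iff all entries of $\sigma$ are equal and positive. Given this, claims (ii)--(iv) become immediate after translating conditions on the vector of singular values back to conditions on $W$.

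For part (i), I would observe that the singular values of $cW$ are $|c|\sigma_1,\ldots,|c|\sigma_d$, so both $\|\sigma\|_{\ell^1}$ and $\|\sigma\|_{\ell^2}$ scale by $|c|$, and their ratio is unchanged; hence $\nu(cW)=\nu(W)$. For the upper bound on $R$, I would apply Cauchy--Schwarz:
\[
\|\sigma\|_{\ell^1} = \sum_{i=1}^d \sigma_i \cdot 1 \le \sqrt{\sum_i \sigma_i^2}\,\sqrt{d} = \sqrt{d}\,\|\sigma\|_{\ell^2},
\]
with equality iff $\sigma$ is proportional to $(1,\ldots,1)$. For the lower bound I would square $\|\sigma\|_{\ell^1}$ and use nonnegativity of $\sigma$:
\[
\|\sigma\|_{\ell^1}^2 = \sum_i \sigma_i^2 + 2\sum_{i<j}\sigma_i\sigma_j \ge \sum_i \sigma_i^2 = \|\sigma\|_{\ell^2}^2,
\]
with equality iff every cross term $\sigma_i\sigma_j$ vanishes, i.e.\ at most one $\sigma_i$ is nonzero. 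Together these give $1 \le R(\sigma) \le \sqrt{d}$, which proves (ii) after dividing by $\sqrt{d}-1$.

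For (iii), $\nu(W)=0$ means $R(\sigma)=1$, which by the equality case above forces exactly one singular value of $W$ to be nonzero (using $W\neq 0$), i.e.\ $\mathrm{rank}(W)=1$; conversely if $\mathrm{rank}(W)=1$ then only one $\sigma_i\neq 0$ and $R(\sigma)=1$. For (iv), $\nu(W)=1$ means $R(\sigma)=\sqrt{d}$, which by the Cauchy--Schwarz equality case forces all $\sigma_i$ equal, and since $\sigma\neq 0$ they must all be a common positive value, giving maximal rank $d$ with equal singular values; the converse is the same computation read backward.

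There is no real obstacle in this argument; the only thing to be a little careful about is that the singular values are nonnegative, so that the elementary identity $\|\sigma\|_{\ell^1}^2 = \|\sigma\|_{\ell^2}^2 + 2\sum_{i<j}\sigma_i\sigma_j$ yields the lower bound without absolute values and cleanly characterizes the equality case as ``at most one nonzero entry.'' Everything else is direct bookkeeping with the definition of $\nu$.
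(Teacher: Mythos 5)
Your proof is correct, and it takes a genuinely different route from the paper's. You bound the ratio $\|\sigma\|_{\ell^1}/\|\sigma\|_{\ell^2}$ directly via two standard inequalities with clean equality cases: Cauchy--Schwarz against the all-ones vector for the upper bound $\sqrt{d}$ (equality iff all $\sigma_i$ equal), and the expansion $\|\sigma\|_{\ell^1}^2 = \|\sigma\|_{\ell^2}^2 + 2\sum_{i<j}\sigma_i\sigma_j$ for the lower bound $1$ (equality iff at most one $\sigma_i$ is nonzero). The paper instead uses a smoothing/exchange argument: it shows that replacing a pair $(\sigma_i,\sigma_j)$ by $(\sigma_i+\sigma_j,0)$ preserves $\|\sigma\|_{\ell^1}$ while strictly increasing $\|\sigma\|_{\ell^2}$ unless one entry vanishes, and replacing the pair by two copies of its average preserves $\|\sigma\|_{\ell^1}$ while strictly decreasing $\|\sigma\|_{\ell^2}$ unless the entries are equal, then concludes by contradiction that the extrema occur at the rank-one and all-equal configurations. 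Your version has the advantage of identifying the extremal configurations and their equality conditions in one step from classical inequalities, avoiding the implicit appeal to attainment of extrema (compactness or termination of the exchange process) that the paper's ``simple argument by contradiction'' glosses over; the paper's exchange argument, on the other hand, gives a more hands-on picture of how mass concentration versus equalization of the singular values moves $\nu$ toward $0$ or $1$. Both proofs handle part (i) identically via homogeneity of norms.
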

\begin{proof}
Since we are assuming $W$ is nonzero, at least one singular value is nonzero
and hence $||\sigma||_{\ell^2} \neq 0$. Property~\eqref{trc1} is immediate 
from the scaling property $|| c \sigma|| = |c| \cdot ||\sigma||$ satisfied 
by all norms.

To establish the other properties, observe that we have
\begin{equation}
    (\sigma_i + \sigma_j)^2 \geq \sigma_i^2 + \sigma_j^2 \geq 2 \left( 
    \frac12 \sigma_i + \frac12 \sigma_j \right)^2 \,.
\end{equation}
The first inequality holds since singular values are nonnegative, and the 
inequality is strict unless $\sigma_i$ or $\sigma_j$ vanishes. The second 
inequality comes from an application of Jensen's inequality and is strict 
unless $\sigma_i = \sigma_j$. Thus, replacing $(\sigma_i, \sigma_j)$ by 
$(\sigma_i + \sigma_j, 0)$ preserves $||\sigma||_{\ell^1}$ while increasing 
$||\sigma||_{\ell^2}$ unless one of $\sigma_i$ or $\sigma_j$ is zero. 
Similarly, replacing $(\sigma_i, \sigma_j)$ by $(\frac12 \sigma_i + 
\frac12 \sigma_j, \frac12 \sigma_i + \frac12 \sigma_j)$ preserves 
$||\sigma||_{\ell^1}$ while decreasing $||\sigma||_{\ell^2}$ unless 
$\sigma_i = \sigma_j$. By a simple argument by contradiction, it follows 
that the minima occur for $\sigma = (\sigma_1, 0, \dotsc, 0)$, in which 
case $\nu(W) = 0$ and the maxima occur for
$\sigma = (\sigma_1, \dotsc, \sigma_1)$, in which case $\nu(W) = 1$. 
\end{proof}

We can also obtain a better intuition about the minimum and maximum of $\nu(W)$
by looking at the 2D case visualized in Figure~\ref{fig:tracenormcoeffsketch}.
For a fixed $||\sigma||_{\ell^2}=\sigma$, $||\sigma||_{\ell^1}$ can vary from 
$\sigma$ to  $\sqrt2\sigma$. The minimum $||\sigma||_{\ell^1}$ happens when either 
$\sigma_1$ or $\sigma_2$ are zero. For these values  
$||\sigma||_{\ell^2}=||\sigma||_{\ell^1}$ and as a result $\nu(W)=0$. 
Similarly, the maximum $||\sigma||_{\ell^1}$ happens for $\sigma_1=\sigma_2$, 
resulting in $\nu(W)=1$.

\begin{figure}
\centering
\includegraphics[scale=0.15]{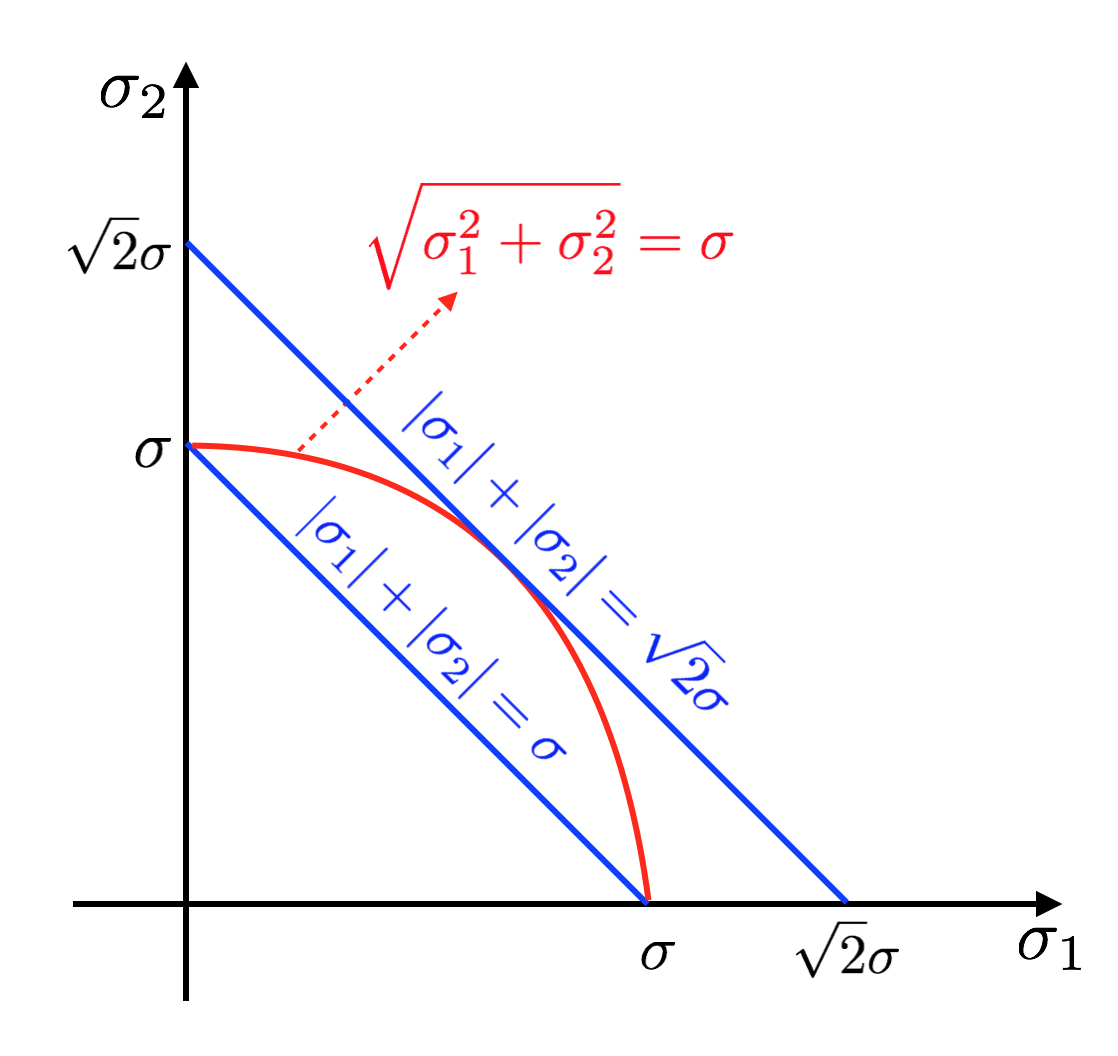}
\caption{Contours of $||\sigma||_{\ell^1}$ and $||\sigma||_{\ell^2}$. 
$||\sigma||_{\ell^2}$ is kept constant at $\sigma$. For this case, 
$||\sigma||_{\ell^1}$ can vary from $\sigma$ to $\sqrt2\sigma$.}
\label{fig:tracenormcoeffsketch}
\end{figure}

\section{Model design considerations}
\label{sec:othermethods}

We describe here a few preliminary insights that informed our choice of baseline
model for the experiments reported in Sections~\ref{sec:modelcomp}
and~\ref{sec:fastkernels}.

Since the target domain is on-device streaming speech recognition with low
latency, we chose to focus on Deep Speech 2 like models with forward-only GRU
layers~\citep{amodei2016deep}.

\subsection{Growing recurrent layer sizes}
\label{sec:growrec}

Across several data sets and model architectures, we consistently found that the
sizes of the recurrent layers closer to the input could be shrunk without
affecting accuracy much. A related phenomenon was observed
in~\citet{prabhavalkar2016compression}: When doing low rank approximations of the
acoustic model layers using SVD, the rank required to explain a fixed threshold
of explained variance grows with distance from the input layer.

To reduce the number of parameters of the baseline model and speed up
experiments, we thus chose to adopt growing GRU dimensions. Since the hope is
that the compression techniques studied in this paper will automatically reduce
layers to a near-optimal size, we chose to not tune these dimensions, but simply
picked a reasonable affine increasing scheme of 768, 1024, 1280 for the GRU
dimensions, and dimension 1536 for the final fully connected layer.

\subsection{Parameter sharing in the low rank factorization}
\label{sec:lrparamsharing}

For the recurrent layers, we employ the Gated Recurrent Unit (GRU) architecture
proposed in~\citet{cho2014properties,chung2014empirical}, where the hidden state
$h_t$ is computed as follows:
\begin{equation}
  \label{eq:t}
  \begin{aligned}
    z_t &= \sigma(W_z x_t + U_z h_{t - 1} + b_z)\\        
    r_t &= \sigma(W_r x_t + U_r h_{t -1} + b_r )\\
    \tilde{h}_t &= f(W_h x_t + r_t \cdot U_h h_{t-1} + b_h) \\
    h_t &= (1 - z_t) \cdot h_{t - 1} + z_t \cdot \tilde{h}_t
  \end{aligned}
\end{equation}
where $\sigma$ is the sigmoid function, $z$ and $r$ are update and reset gates
respectively, $U_z, U_r, U_h$ are the three recurrent weight matrices, and $W_z,
W_r, W_h$ are the three non-recurrent weight matrices.

We consider here three ways of performing weight sharing when doing low rank
factorization of the 6 weight matrices.

\begin{enumerate}
\item \textbf{Completely joint factorization.} Here we concatenate the 6 weight
matrices along the first dimension and apply low rank factorization to this
single combined matrix. 
\item \textbf{Partially joint factorization.} Here we concatenate the 3 recurrent
matrices into a single matrix $U$ and likewise concatenate the 3 non-recurrent
matrices into a single matrix $W$. We then apply low rank factorization to each
of $U$ and $W$ separately.
\item \textbf{Completely split factorization.} Here we apply low rank
factorization to each of the 6 weight matrices separately.
\end{enumerate}

In~\citep{prabhavalkar2016compression,kuchaiev2017factorization}, the authors
opted for the LSTM analog of \textit{completely joint factorization}, as this
choice has the most parameter sharing and thus the highest potential for
compression of the model. However, we decided to go with \textit{partially joint
factorization} instead, largely for two reasons. First, in pilot experiments, we
found that the $U$ and $W$ matrices behave qualitatively quite differently during
training. For example, on large data sets the $W$ matrices may be trained from
scratch in factored form, whereas factored $U$ matrices need to be either
warmstarted via SVD from a trained unfactored model or trained with a
significantly lowered learning rate. Second, the $U$ and $W$ split is
advantageous in terms of computational efficiency. For the non-recurrent $W$
GEMM, there is no sequential time dependency and thus its inputs $x$ may be
batched across time.

Finally, we compared the partially joint factorization to the completely split
factorization and found that the former indeed led to better accuracy versus
number of parameters trade-offs. Some results from this experiment are shown in
Table~\ref{tab:split_shared_wsj}.

\begin{table}[t]
\caption{Performance of completely split versus partially joint factorization of
recurrent weights.}
\label{tab:split_shared_wsj}
\begin{center}
\begin{tabular}{SSSSS}
\toprule
  & \multicolumn{2}{c}{Completely split} & \multicolumn{2}{c}{Partially joint} \\
\cmidrule(r){2-3} \cmidrule(r){4-5}
         {SVD threshold} & {Parameters (M)}  &  {CER} & {Parameters (M)} &   {CER} \\
\midrule
       0.50 & 6.3  & 10.3 & 5.5 &  10.3 \\
       0.60 & 8.7  & 10.5 & 7.5 &  10.2 \\
        0.70 & 12.0  & 10.3 & 10.2 & 9.9 \\
        0.80 & 16.4 & 10.1 & 13.7 &  9.7 \\
\bottomrule
\end{tabular}
\end{center}
\end{table}

\subsection{Mel and smaller convolution filters}\label{sec:mel}

Switching from 161-dimensional linear spectrograms to 80-dimensional mel
spectrograms reduces the per-timestep feature dimension by roughly a factor of 2.
Furthermore, and likely owing to this switch, we could reduce the
frequency-dimension size of the convolution filters by a factor of 2. In
combination, this means about a 4x reduction in compute for the first and second
convolution layers, and a 2x reduction in compute for the first GRU layer.

On the WSJ corpus as well as an internal dataset of around 1,000 hours of speech,
we saw little impact on accuracy from making this change, and hence we adopted it
for all experiments in Section~\ref{sec:modelcomp}.

\subsection{Gram-CTC and increased stride in convolutions}\label{sec:gramctc}

Gram-CTC is a recently proposed extension to CTC for training models that output
variable-size grams as opposed to single characters~\citep{liu2017gram}. Using
Gram-CTC, we were able to increase the time stride in the second convolution
layer by a factor of 2 with little to no loss in CER, though we did have to
double the number of filters in that same convolution layer to compensate. The
net effect is a roughly 2x speedup for the second and third GRU layers, which
are the largest. This speed up more than makes up for the size increase in the
softmax layer and the slightly more complex language model decoding when using
Gram-CTC. However, for a given target accuracy, we found that Gram-CTC models
could not be shrunk as much as CTC models by means of low rank factorization.
That is, the net effect of this technique is to increase model size in exchange for
reduced latency.

\subsection{low rank factorization versus learned sparsity}
\label{sec:lrvssparse}

Shown in Figure~\ref{fig:10p_cer_vs_params} is the parameter reduction versus
relative CER increase trade-off for various techniques on an internal data set of
around 1,000 hours of speech.

\begin{figure}
\centering
\includegraphics[scale=\figscale]{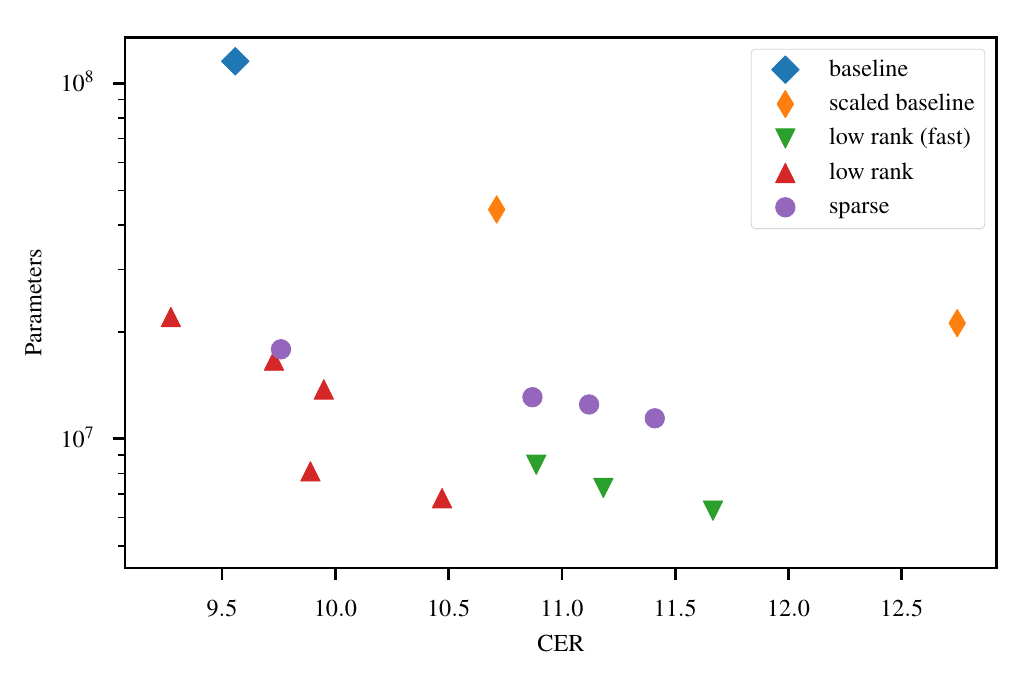}
\caption{CER versus parameter on an internal dataset, colored by parameter
reduction technique.}
\label{fig:10p_cer_vs_params}
\end{figure}

The baseline model is a Deep Speech 2 model with three forward-GRU layers of
dimension 2560, as described in~\citet{amodei2016deep}. This is the same baseline
model used in the experiments of~\citet{narang2017exploring}, from which paper we
also obtained the sparse data points in the plot. Shown also are versions of the
baseline model but with the GRU dimension scaled down to 1536 and 1024. Overall,
models with low rank factorizations on all non-recurrent and recurrent weight
matrices are seen to provide the best CER vs parameters trade-off. All the low
rank models use growing GRU dimensions and the partially split form of low rank
factorization, as discussed in Sections~\ref{sec:growrec}
and~\ref{sec:lrparamsharing}. The models labeled \textit{fast} in addition use
Gram-CTC as described in Section~\ref{sec:gramctc} and mel features and reduced
convolution filter sizes as described in Section~\ref{sec:mel}.

As this was more of a preliminary comparison to some past experiments, the setup
was not perfectly controlled and some models were, for example, trained for more
epochs than others. We suspect that, given more effort and similar adjustments
like growing GRU dimensions, the sparse models could be made competitive with the
low rank models. Even so, given the computational advantage of the low rank
approach over unstructured sparsity, we chose to focus only on the former going
forward. This does not, of course, rule out the potential usefulness of other,
more structured forms of sparsity in the embedded setting.

\end{document}